\begin{document}
\title{Learning a Robust Society of Tracking Parts using Co-occurrence Constraints}

\titlerunning{Society of Tracking Parts}
%
\author{Elena Burceanu\inst{1, 2, 3}\and
Marius Leordeanu\inst{1, 2, 4}}
%
\authorrunning{E. Burceanu and M. Leordeanu}
%

\institute{Bitdefender, Romania \\
\email{eburceanu@bitdefender.com} \and  Institute of Mathematics of the Romanian Academy \and Mathematics and Computer Science, University of Bucharest, Romania \and Automatic Control and Computer Science, University Politehnica of Bucharest\\
\email{marius.leordeanu@cs.pub.ro}	}
\maketitle              
\begin{abstract}
Object tracking is an essential problem in computer vision that has been researched for several decades. One of the main challenges in tracking is to adapt to object appearance changes over time and avoiding drifting to background clutter.
We address this challenge by proposing a deep neural network composed of different parts, which functions as a society of tracking parts. They work in conjunction according to a certain policy and learn from each other in a robust manner, using co-occurrence constraints that ensure robust inference and learning. From a structural point of view, our network is composed of two main pathways. One pathway is more conservative. It carefully monitors a large set of simple tracker parts learned as linear filters over deep feature activation maps. It assigns the parts different roles. It promotes the reliable ones and removes the inconsistent ones. 
We learn these filters simultaneously in an efficient way, with a single closed-form formulation, for which we propose novel theoretical properties. The second pathway is more progressive. It is learned completely online and thus it is able to better model object appearance changes. In order to adapt in a robust manner, it is learned only on highly confident frames, which are decided using co-occurrences with the first pathway. Thus, our system has the full benefit of two main approaches in tracking. The larger set of simpler filter parts offers robustness, while the full deep network learned online provides adaptability to change. As shown in the experimental section, our approach achieves state of the art performance on the challenging VOT17 benchmark, outperforming the published methods both on the general EAO metric and in the number of fails, by a significant margin.
\keywords{unsupervised tracking \and co-occurrences \and part-based tracker}
\end{abstract}

\section{Introduction}
\label{sec:intro}

Object tracking is one of the first and most fundamental problems that has been addressed in computer vision. While it has attracted the interest of many researchers over several decades of computer vision, it is far from being solved~\cite{vot16, vot17, VOT2014, MOT2014, OTB2015}. The task is hard for many reasons. Difficulties could come from severe changes in object appearance, presence of background clutter and occlusions that might take place in the video. The only ground-truth knowledge given to the tracker is the bounding box of the object in the first frame. Thus, without knowing in advance the properties of the object being tracked, the tracking algorithm must learn them on the fly. It must adapt correctly and make sure it does not jump toward other objects in the background. That is why the possibility of drifting to the background poses on of the main challenges in tracking.

Our proposed model, at the conceptual level, is composed of a large group of different tracking parts, functioning like a society, each with different roles and powers over the final decisions. They learn from each other using certain co-occurrence rules and are monitored according to their reliability. 
The way they function together gives them robustness. From a structural point of view, they are all classifiers within a large deep neural network structure, composed of two pathways, namely the FilterParts and the ConvNetPart pathways (see Figure~\ref{fig:STP}). While the first insures robustness through the co-occurrence of a large number of smaller tracker parts, the second pathway insures the ability to adapt to subtle object changes. The ConvNetPart is fully trained online, end-to-end, and uses as ground-truth high confidence tracker responses that are decided together with the whole society of parts. We will refer to the frames of high confident tracker responses as Highly Confident Frames (HCFs). We provide more details in Section~\ref{subsec:convnetpart_pathway}. Using as ground-truth only a small set of high precision points is also related to the recent work on unsupervised object discovery in video~\cite{ema}. 

Our approach is based on two key insights. One is the organization of the whole tracker into a large group of different types of classifiers, simpler and more complex, at multiple scales and with different levels of depth, as part of a larger neural network structure, that make decisions together based on mutual agreements. The second idea is the usage of co-occurrence  constraints as basis for ensuring robustness, both for online training of the overall tracker, as well as for frame by frame inference. \\

\noindent \textbf{Relation to prior work:}
Existing trackers in the literature differ in terms of type of
target region, appearance model, mathematical formulation and optimization. Objects can be represented by boxes, ellipses~\cite{Kuo2011HowDP}, superpixels~\cite{Wang2011SuperpixelT} or blobs~\cite{Godec2011HoughbasedTO}. The appearance model can be described as one feature set over the region or as an array of features, one for each part of the target~\cite{Forsyth2010ObjectDW,Shu2012PartbasedMT,KwonL09}.

In recent years, trackers based on discriminative correlation filters (DCF), such as MOSSE~\cite{MOSSE} and KCF~\cite{KCF}, achieved the best results on public benchmarks. Newer models like Staple~\cite{Staple}, CCOT~\cite{ccot} and ECO~\cite{eco} provide consistent improvements by adding to the DCF model different components, such as multi-channel feature maps and robust scale estimation~\cite{scalebmvc, scalepami}. CCOT, for instance, proposes to learn continuous convolution parameters by optimizing a function that results from transforming the original feature space into a continuous one and applying onto it the continuous convolutions. While learning the parameters continuously, provides adaptability
to the tracker, overfitting to noise and drifting could pose a threat.
To reduce overfitting, ECO comes with a generative model over training samples. Nevertheless, most recent tracking approaches still suffer from overfitting to background noise, which causes tracker failure. 

A common approach for top trackers in the recent literature is to model object features with deep convolutional networks (CNNs). To address the issue of robustness against background noise in the case of online training of CNNs, the TCNN~\cite{TCNN} algorithm, for example, maintains stability of appearance through a tree structure of CNNs. MLDF~\cite{vot16} uses discriminative multi-level deep features between foreground and background together with a Scale Prediction Network. Another approach, MDNET~\cite{MDNET} is used as starting point for many CNN trackers. For instance, SSAT~\cite{vot16} uses segmentation to properly fit the bounding box and builds a separate model to detect whether the target in the frame is occluded or not. It uses this to consider frames for training the shape segmentation model.

Other line of object tracking research is the development of part-based models, which are more resistant to appearance changes and occlusions. Their multi-part nature gives them robustness against noisy appearance changes in the video. In recent benchmarks however, they did not obtain the top results. For instance, in VOT16~\cite{vot16} challenge, while the number of part-based trackers, such as DPCF~\cite{DPCF}, CMT~\cite{CMT}, DPT \cite{DPT}, BDF~\cite{BDF}, was relatively high (25 \%), the best one of the group, SHCT~\cite{SHCT}, is on the 14th place overall. SHCT~\cite{SHCT} is a complex system using a graph structure of the object that models higher order dependencies between object parts, over time. As it is the case with deep networks, we believe complex systems are prone to overfitting to background noise without a high precision way of selecting their unsupervised online training frames. 

Our proposed model combines the best of two worlds. On one hand it uses a powerful deep convolutional network trained on high confidence frames, in order to learn features that better capture and adapt to object appearance changes. On the other hand, it uses the power of a large group of simpler classifiers that are learned, monitored, added and replaced based on co-occurrence constraints. Our approach is validated by the very low failure rate of our tracker, relative to the competition on the VOT2017 and VOT16 benchmarks. \\

\noindent \textbf{Our main contributions:}
1) Our first contribution is the design of a tracker as a dual-pathway network, with FilterParts and ConvNetPart pathways working in complementary ways within a robust society of tracking parts. FilterParts is more robust to background noise and uses many different and relatively simple trackers learned on top of deep feature activation maps. ConvNetPart is better capable to learn object appearance and adapt to its changes. It employs a deep convolutional network that is learned end to end during tracking using unsupervised high confidence frames for ground-truth.
2) Our second contribution is that every decision made for learning and inference of the tracker is based on robust co-occurrence constraints. Through co-occurrences over time we learn which FilterParts classifiers are reliable or not. Thus we can change their roles and add new ones. Also, through co-occurrences between the vote maps of the two pathways, we decide which frames to choose for training the ConvNetPart path along the way. Last but not least, through co-occurrences we decide the next object center by creating a combined vote map from all reliable parts. \\
3) Our third contribution addresses a theoretical point, in Section~\ref{sec:math}. We show that the efficient closed-form formulation for learning object parts simultaneously in a one sample vs. all fashion is equivalent to the more traditional, but less efficient, balanced one vs. all formulation.

\section{Intuition and motivation}
\label{sec:society}

A tracking model composed of many parts, with different degrees of complexity, could use the co-occurrences of their responses in order to monitor over time, which parts are reliable and which are not. This would provide \textbf{stability}.
They could also be used to train the more complex ConvNetPart pathway only on high-confidence frames on which the two pathway responses strongly co-occur in the same region. Thus, they could provide \textbf{robust adaptability}. 
Last but not least, by taking in consideration only where sufficient parts votes co-occur for the object center, we could also achieve \textbf{robust frame to frame performance}. We discuss each aspect in turn, next: \\

\begin{figure}
	\centering
	\includegraphics[width=0.85\linewidth]{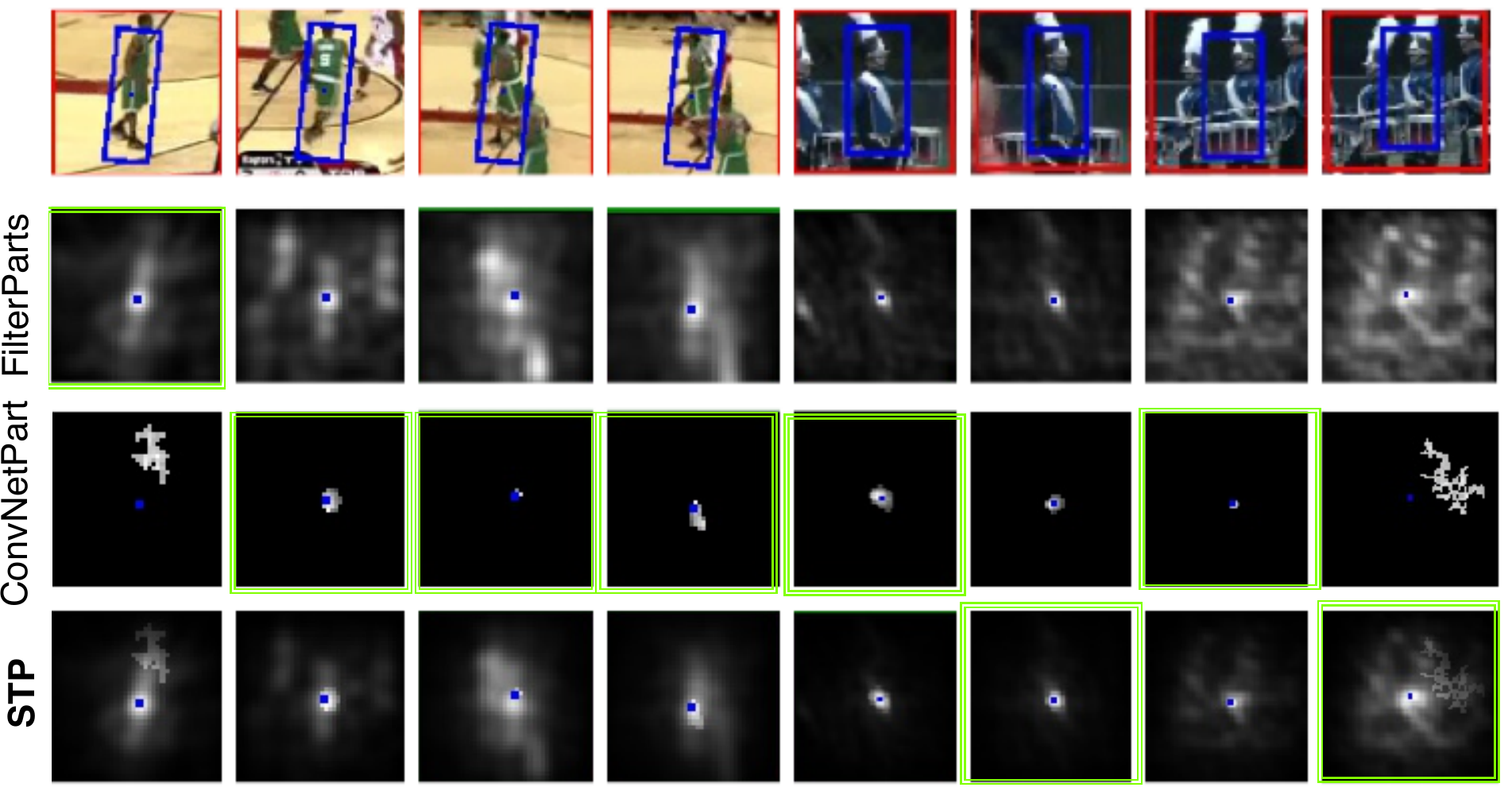}
	\caption{Qualitative comparisons between FilterParts, ConvNetPart and the final (STP) voting maps. Often, in complicated scenarios, the ConvNetPart vote could be of better quality. There are also relatively simple cases where the ConvNetPart activation map look bad, and we need the stability of the FilterParts. The final vote map (STP), provides a more robust maximum. The blue point represent the center of the final vote.}
	\label{fig:vote_samples}
\end{figure}

\noindent \textbf{1) Stability through steadiness:}
A part classifier is a discriminative patch detector (detailed in Section~\ref{subsec:filterparts_pathway}). We consider a part to be reliable if it has showed independently and frequently enough agreement in voting with the majority of the other parts - a statistically robust measure. A certain part is at the beginning monitored as a \textbf{candidate part}, and not used for deciding the next tracker move. It is only after a candidate part's vote for the object center co-occurred frequently enough at the same location with the majority vote, we promote the candidate to become a \textbf{reliable part}. From then on its vote will participate in the final vote map. Tracking parts that display consistent reliable behaviour over relatively long periods of time are promoted to the status of \textbf{gold members} - they will permanently have the right to vote, they cannot be downgraded and will not be monitored. In similar fashion, for the ConvNetPart, we always keep the tracker output from the first frames (=20) in video during the learning updates of the convolutional net. We further ensure robustness by favoring current tracker prediction to be close to the previous one. We use a tracker location uncertainty mask, centered around the previous center location. \\

\noindent \textbf{2) Robust adaptation:} the tracker is able to continuously adapt by adding candidate parts and removing unreliable ones. It also adapts by learning the ConvNetPart on high confidence frames accumulated over time. For object parts along the FilterParts pathway, gaining reliability, loosing it  or  becoming a gold member, can happen only over time. It is the temporal buffer, when tracking parts are monitored, which ensures both stability and the capacity to adapt to new conditions in a robust way. In time, the second pathway has access to a larger and larger set of reliable HCFs that are monitored through co-occurrences between the voted tracker centers of the two pathways. 
By training the net on larger sets of high quality frames we achieve both stability and capacity to adapt to true object appearance changes. As mentioned previously, HCFs used as ground-truth comes from past frames where the center given by the FilterParts alone co-occurred at the same location (within a very small distance) with the one given by the ConvNetPart. In Figure~\ref{fig:hpp_correlation} we show why the distance between the two pathways is a good measure for frame confidence - the strong correlation between the distance between the tracker and the ground-truth and the distance between the centers voted along the two pathways is evident. In Figure~\ref{fig:vote_samples} we also show qualitative results to demonstrate how ConvNetPart and FilterParts could better work together in conjunction, than separately. \\

\noindent \textbf{3) Robust frame to frame tracking:} Each part produces a prediction map for the object center. For the FilterParts pathway, an average vote map is obtained from all reliable parts. That map is then added to the ConvNetPart final vote map, with a strong weight given to the FilterParts pathway. This is the final object center map in which the peak is chosen as the next tracker location. It is thus only through the same strong \textbf{co-occurrences} of votes at a single location that we robustly estimate the next move.

\section{The tracker structure, function and learning}
\label{sec:algo}

\noindent \textbf{Tracker structure:}
At the structural level, the Society of Tracking Parts (STP) has two pathways: FilterParts and ConvNetPart (Figure~\ref{fig:STP}). The first pathway is formed of smaller object parts that are linear classifiers over activation maps, from a pre-learned convolutional net. The ConvNetPart pathway is a deep convolutional net, with the same structure as the first pathway up to a given depth. Now  we present the actual CNNs structures of the two pathways:

The ConvNetPart is a fully convolutional network, where the first part (common as architecture with FilterParts features extractor) has 7 convolutional layers, with 3x3 filters (each followed by ReLU) and 2 maxpooling layers (2x2). It is inspired from the VGG architecture~\cite{vgg}.
The second part, is composed of 4 convolutional layers with 3x3 filters, having the role to gradually reduce the number of channels and computing the segmentation mask for object center prediction. We could have tested with different, more recent architectures, but in our experiments this architecture was strong enough.
\begin{figure}
	\centering
	\includegraphics[width=0.9\linewidth]{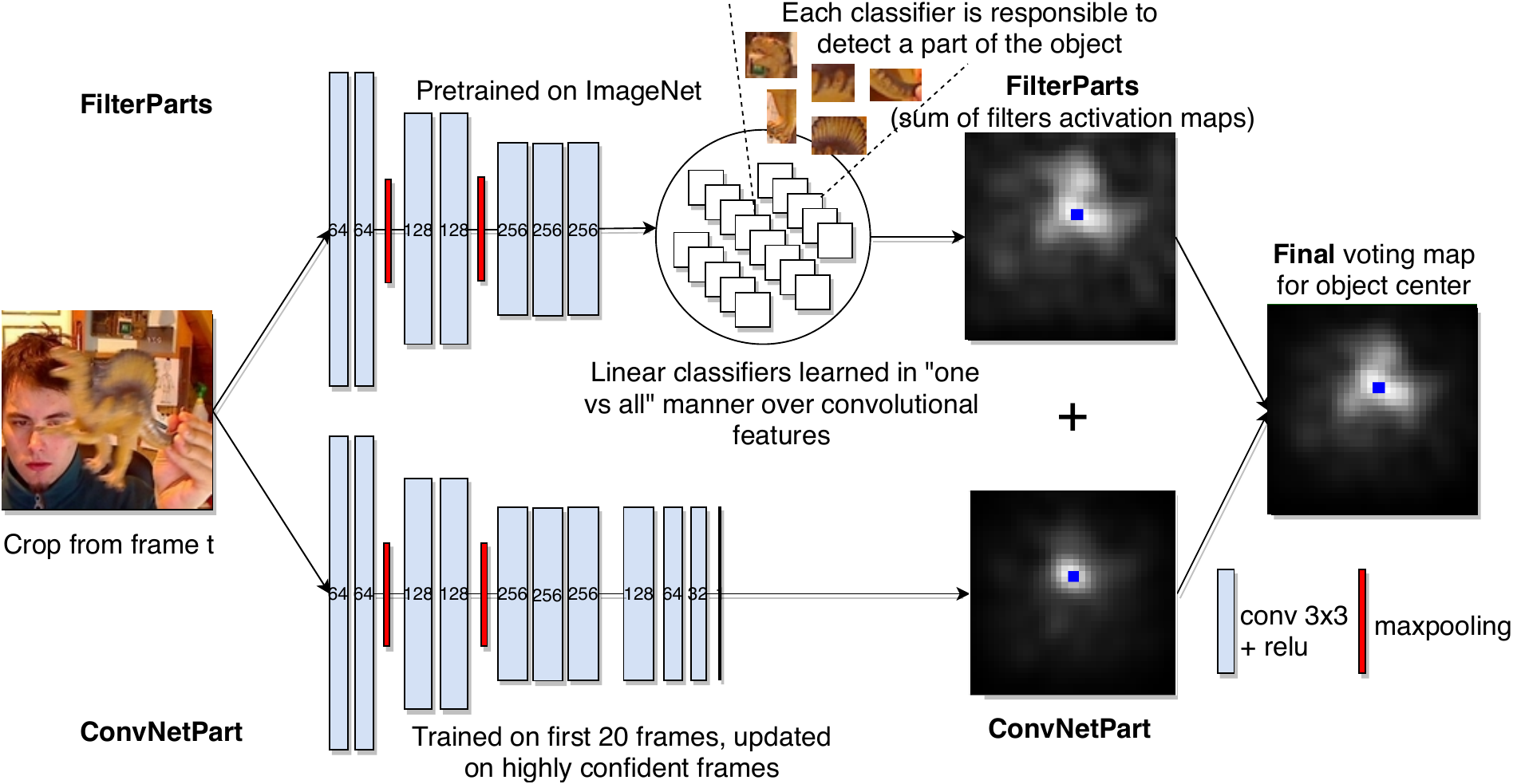}
	\caption{STP overview: The tracker functions as a society of parts. It combines the vote for center maps from all parts over two main pathways, FilterParts and ConvNetPart. The two pathways are learned differently. The FilterParts classifiers once learned are fixed individually but adapt as a group. The ConvNetPart is trained end-to-end with back-propagation over unsupervised tracker outputs from previous highly confident frames (HCFs).}
	\label{fig:STP}
\end{figure}

\noindent \textbf{Tracking by co-occurrences of part votes:}
The tracker always chooses as its next move at time $t$, the place (the center of the bounding box) $l_{t+1}$ where there is the largest accumulation of votes in $P_t$, its final object center prediction map. For each filter part $i$, along the FilterParts pathway, there is an activation map $F_{ti}$, computed as the response of the classifier $c_i$ corresponding to that part over the search region. The activation maps of filter parts are each shifted with the part displacement from object center and added together to form the overall $F_t$. When all filter parts are in strong agreement, all votes from $F_t$ focus around a point. For the second pathway, the object center prediction map $C_t$ is the output of the ConvNetPart network, given the same image crop input as to FilterParts.
After smoothing $F_t$ with a small Gaussian filter, it is added to $C_t$. The final prediction map $P_t$ is then obtained by multiplying pixelwise the linear combination of $C_t$ and $F_t$, with a center uncertainty mask $M_c$, around the center in the previous frame. $M_c$ is a circular soft mask, with exponential decay in weights, as the distance from the previous center prediction increases. Thus, $P_t = (\alpha F_t + (1-\alpha) C_t)\cdot M_c$, where $\cdot$ denotes pixelwise multiplication. 
$M_c$ encourages small center movements at the expense of large, sharp, abrupt ones.
The maximum in $P_t$ is chosen as the next center location $l_{t+1}$. 

\subsection{Learning along the FilterParts pathway}
\label{subsec:filterparts_pathway}
STP chooses in the FilterParts update phase new parts to add as candidates. They are classifiers, of different sizes and locations, represented as linear filters over activation maps of deep features. 
To each part it corresponds a patch, within the tracker's main bounding box. Only patch classifiers that are highly discriminative from the rest are selected. One is considered  discriminative if the ratio between the response on its own corresponding patch (the positive patch) and the maximum response over negatives is larger than a threshold $t_d$. Positive patches are selected from the inside of the bounding box, while (hard) negatives are selected as patches from outside regions with high density of edges. We sample patches from a dense grid (2 pixels stride) of 3 sizes. The small ones will see local appearance and the larger ones will contain some context. A point in grid is covered only by one selected discriminative patch, at one size. The smaller ones have priority and we search the next size for the patch centered in the grid point only if the smaller patch is not discriminative enough. The object box is completely covered when each pixel is covered by any given patch.
A simple budgeting mechanism is added, in order to limit the speed impact. When too many parts of a certain patch size become reliable $>N_{max}$, we remove the new reliable ones which are most similar to older parts, based on simple dot product similarity for the corresponding classifiers.\\

\noindent \textbf{Mathematical formulation for filter parts classifiers:}
\label{sec:math}
We introduce the mathematical formulation for learning the part classifiers in FilterParts.
For a given feature type let $\mathbf{d}_i \in \mathbb{R}^{1 \times k}$ be the $i$-th descriptor, with $k$ real elements, corresponding to an patch window at a certain scale and location relative to the object bounding box. In our case, the descriptor $\mathbf{d}_i$ is a vector version of the specific patch concatenated over all activation map channels over the considered layers of depth in the FilterParts pathway. Our formulation is general and does not depend on a specific level of depth - features could as well be simple pixel values of any image channel. Let then $\mathbf{D}$ be the data matrix, formed by putting all descriptors in the image one row below the other. 

We learn the optimal linear classifier $\mathbf{c}_i$ that separates $\mathbf{d}_i$ from the rest of the patches, according to a regularized linear least squares cost, which is both fast and accurate. Classifier $\mathbf{c}_i$ minimizes the following cost (~\cite{DBLP:books/lib/Murphy12} Ch. 7.5):

\begin{equation}
\label{eq:learning_cost}
\min \frac{1}{n}\|\mathbf{Dc}_i-\mathbf{y}_i\|^2 + \lambda \mathbf{c}_i^\top \mathbf{c}_i.
\end{equation}

In classification tasks the number of positives and negatives should be balanced, according to their prior distributions and the specific classifier used. Different proportions usually lead to different classifiers. In linear least squares formulations weighting differently the data samples could balance learning. \\

\noindent \textbf{Learning with one sample versus all:}
The idea of training one classifier for a single positively labeled data sample has been successfully used before, for example, in the context of SVMs~\cite{malisiewicz2011ensemble}.
When using very few positive samples for training a ridge regression classifier, weighting is applied to balance the data. Here we show that it is possible, when a single positive sample is used, to obtain the same result with a single positive sample without weighting, as if balancing was applied. We show a novel result, that while the magnitude of the corresponding classifier vector is different for the single positive data sample case, its direction remains unchanged w.r.t. the balanced case.  \\

\begin{theorem}
	For any positive weight $w_i$ given to the positive $i$-th sample, when the negative labels considered are $0$ and the positive label is $1$ and all negatives have the same weight $1$, the solution vector to the weighted least squares version of Eq.~\ref{eq:learning_cost} will have the same direction and it might differ only in magnitude. In other words, it is invariant under L2 normalization. 
	\label{single_theorem}
\end{theorem}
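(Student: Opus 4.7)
The plan is to write the weighted ridge-regression solution in closed form and show, via a Sherman--Morrison rank-one update argument, that the dependence on the positive-sample weight $w_i$ collapses to a single scalar multiplier, so that the direction of $\mathbf{c}_i$ is unchanged.

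First, I would put the weighted version of Eq.~\ref{eq:learning_cost} into matrix form. Let $\mathbf{W}$ be the diagonal weight matrix with entry $w_i$ in the $i$-th position and $1$ in all other positions, and $\mathbf{y}_i = \mathbf{e}_i$ the indicator vector of the positive sample. The minimizer satisfies the normal equations $(\mathbf{D}^\top \mathbf{W} \mathbf{D} + \lambda \mathbf{I})\, \mathbf{c}_i = \mathbf{D}^\top \mathbf{W} \mathbf{y}_i$. Since $\mathbf{W}\mathbf{e}_i = w_i \mathbf{e}_i$, the right-hand side reduces to $w_i \mathbf{d}_i^\top$, so the $w_i$ factor pulls out cleanly and what remains, up to the Gram matrix, is proportional to the sample itself.

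Second, I would view the weighted Gram matrix as a rank-one perturbation of the unweighted one, $\mathbf{D}^\top \mathbf{W} \mathbf{D} + \lambda \mathbf{I} = \mathbf{A} + (w_i - 1)\, \mathbf{d}_i^\top \mathbf{d}_i$, where $\mathbf{A} := \mathbf{D}^\top \mathbf{D} + \lambda \mathbf{I}$. Applying Sherman--Morrison to this rank-one update and then evaluating on $\mathbf{d}_i^\top$ yields $(\mathbf{A} + (w_i - 1)\, \mathbf{d}_i^\top \mathbf{d}_i)^{-1} \mathbf{d}_i^\top = \tfrac{1}{1 + (w_i - 1)\, s}\, \mathbf{A}^{-1} \mathbf{d}_i^\top$, with scalar $s := \mathbf{d}_i \mathbf{A}^{-1} \mathbf{d}_i^\top$. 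Reinstating the outer $w_i$ from the right-hand side gives $\mathbf{c}_i(w_i) = \tfrac{w_i}{1 + (w_i - 1)\, s}\, \mathbf{c}_i(1)$, a positive scalar multiple of the unweighted solution, so the direction is preserved and L2 normalization yields the same vector.

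The main technical observation --- and what I expect to be the crux --- is the alignment between the right-hand side and the rank-one update vector: both happen to be proportional to $\mathbf{d}_i^\top$, which is what makes the Sherman--Morrison correction act as a pure rescaling rather than a rotation. A secondary point is well-posedness: $1 + (w_i - 1)\, s \neq 0$ follows from $\mathbf{A} \succ 0$ (hence $s \geq 0$) together with $w_i > 0$ keeping $\mathbf{D}^\top \mathbf{W} \mathbf{D} + \lambda \mathbf{I}$ positive definite. The $\tfrac{1}{n}$ factor in Eq.~\ref{eq:learning_cost} and any global rescaling of $\mathbf{W}$ can be absorbed into $\lambda$ and do not affect the directional conclusion.
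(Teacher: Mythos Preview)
Your proposal is correct and follows essentially the same route as the paper: both recognize that the weighted normal equations differ from the unweighted ones by a rank-one term $(w_i-1)\,\mathbf{d}_i\mathbf{d}_i^\top$ aligned with the right-hand side $w_i\mathbf{d}_i$, and both arrive at the identical scalar $q=\dfrac{w_i}{1+(w_i-1)\,\mathbf{d}_i^\top\mathbf{c}_i}$ (your $s$ equals the paper's $\mathbf{d}_i^\top\mathbf{c}_i$ since $\mathbf{c}_i=\mathbf{A}^{-1}\mathbf{d}_i$). The only cosmetic difference is that you invoke Sherman--Morrison explicitly, whereas the paper posits the ansatz $q\mathbf{c}_i$ and verifies it by direct substitution into the perturbed normal equations.
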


\begin{proof}
	Let $\mathbf{c}_i$ be the solution to Eq.~\ref{eq:learning_cost}. At the optimum the gradient vanishes, thus the solution respects the following equality $(\mathbf{D^\top D} + \lambda \mathbf{I}_k)\mathbf{c}_i = \mathbf{D^\top}\mathbf{y}_i$. Since $y_i(i)=1$ and $y_i(j)=0$ for $j \neq i$, it follows that $(\mathbf{D^\top D} + \lambda \mathbf{I}_k)\mathbf{c}_i = \mathbf{d}_i$. Since the problem is convex, with a unique optimum, a point that obeys such an equality must be the solution. In the weighted case, a diagonal weight $n \times n$ matrix $\mathbf{W}$ is defined, with different weights on the diagonal $w_{j}=\mathbf{W}(j,j)$, one for each data sample. In that case, the objective cost optimization in Eq.~\ref{eq:learning_cost} becomes: 

	\begin{equation}
		\label{eq:weighted_learning_cost}
		\min \frac{1}{n}\|\mathbf{W}^{\frac{1}{2}}(\mathbf{Dc}_i-\mathbf{y}_i)\|^2 + \lambda \mathbf{c}_i^\top \mathbf{c}_i.
	\end{equation}
	
	We consider when all negative samples have weight $1$ and the positive one is given $w_i$. 
	Now we show that for any $w_i$, if $\mathbf{c}_i$ is an optimum of Eq.~\ref{eq:learning_cost} then there is a real number $q$ such that $q\mathbf{c}_i$ is the solution of the weighted case. The scalar $q$ exists if it satisfies $(\mathbf{D^\top D} + \mathbf{d}_i \mathbf{d}_i^\top (w_i-1)+\lambda \mathbf{I}_k)q\mathbf{c}_i=w_i\mathbf{d}_i$. And, indeed, it can be verified that $q = \frac{w_i}{1+(w_i-1)(\mathbf{d}_i^\top \mathbf{c}_i)}$ satisfies the required equality. In the \textbf{supplementary material} we have provided a detailed proof.\\
\end{proof}

\noindent \textbf{Efficient multi-class filter learning:}
The fact that the classifier vector direction is invariant under different weighting of the positive sample suggests that training with a single positive sample will provide a robust and stable separator. The classifier can be re-scaled to obtain values close to $1$ for the positive samples. 
Theorem~\ref{single_theorem} also indicates that we could reliably compute filter classifiers for all positive patches in the bounding box at once, by using a single data matrix $\mathbf{D}$. We form the 
target output matrix $\mathbf{Y}$, with one target labels column $\mathbf{y}_i$ for each corresponding sample $\mathbf{d}_i$. Note that $\mathbf{Y}$ is, in fact, the $\mathbf{I}_n$ identity matrix. We now write the multi-class case of the ridge regression model and finally obtain the matrix of one versus all classifiers, with one column classifier for each tracking part: $\mathbf{C} = \mathbf(D^\top D + \lambda \mathbf{I}_k)^{-1} \mathbf{D}^\top$.
Note that $\mathbf{C}$ is a regularized pseudo-inverse of $\mathbf{D}$. $\mathbf{D}$ contains one patch descriptor per line. In our case, the descriptor length is larger than the number of positive and negative samples, so we use the Matrix Inversion Lemma~\cite{DBLP:books/lib/Murphy12}(Ch. 14.4.3.2) and compute $\mathbf{C}$ in an equivalent form: 

\begin{equation}
\mathbf{C} = \mathbf{D}^\top (\mathbf{D D^\top} + \lambda \mathbf{I}_n)^{-1} . 
\end{equation}

\noindent Now the matrix to be inverted is significantly smaller ($n \times n$ instead of $k \times k$).\\

\noindent \textbf{Reliability states:}
\noindent The reliability of a filter part $i$ is estimated as the frequency $f_i$ at which the maximum activation of a given part is in the neighborhood of the maximum in the final activation $P_{t}$ where the next tracker center $l_{t+1}$ is chosen.
If a part is selected for the first time, it is considered a candidate part. Every $U$ frames, the tracker measures the reliability of a given part, and promotes parts with a reliability larger than a threshold $f_i > p_+$, from candidate state (C) to reliable state (R) and from reliable (R) to gold (G). Parts that do not pass the test $f_i \leq p_-$ are removed, except for gold ones which are permanent.

\subsection{Learning along the ConvNetPart Pathway}
\label{subsec:convnetpart_pathway}
The end output of the ConvNetPart pathway is an object center prediction map, of the same size as the one produced along the FilterParts pathway.
Different from FilterParts, the second pathway has a deeper architecture and a stronger representation power, being trained end-to-end with back-propagation along the video sequence. First, we train this net for the first 20 frames, using as ground-truth the FilterParts center prediction (expected to be highly accurate). Afterwards, the ConvNetPart is considered to be reliable part and it will contribute, through its center prediction, to the final tracker prediction.

From then on, the ConvNetPart will be fine-tuned using as ground-truth the final tracker predictions on highly confident frames (HCFs). This will ensure that we keep the object appearance up to date, and we won't drift in cases of local occlusion or distractors. Results from Table~\ref{tab:hcf_results} supports our decision. \\

\noindent \textbf{Selecting training samples from Highly Confident Frames:}
We call HCF (Highly Confident Frame) a frame on which the distance between FilterParts and ConvNetPart votes for object center prediction is very small. When the two pathways vote almost on the same center location, we have high confidence that the vote is correct. In order to balance efficiently the number of updates with keeping track of object appearance changes, we do the following. First, we accumulate frames of high confidence and second, at regular intervals, we fine tune the network using the accumulated frames. The assumption we made is that on HCFs, our tracker is closer to ground-truth than in the other frames. This is confirmed in Figure~\ref{fig:hpp_correlation}. 11\% of all frames are HCFs. More extensive tests for validating HCF usefulness are described in Section~\ref{sec:experiments}.

\begin{figure}
	\centering
	\includegraphics[width=0.7\linewidth]{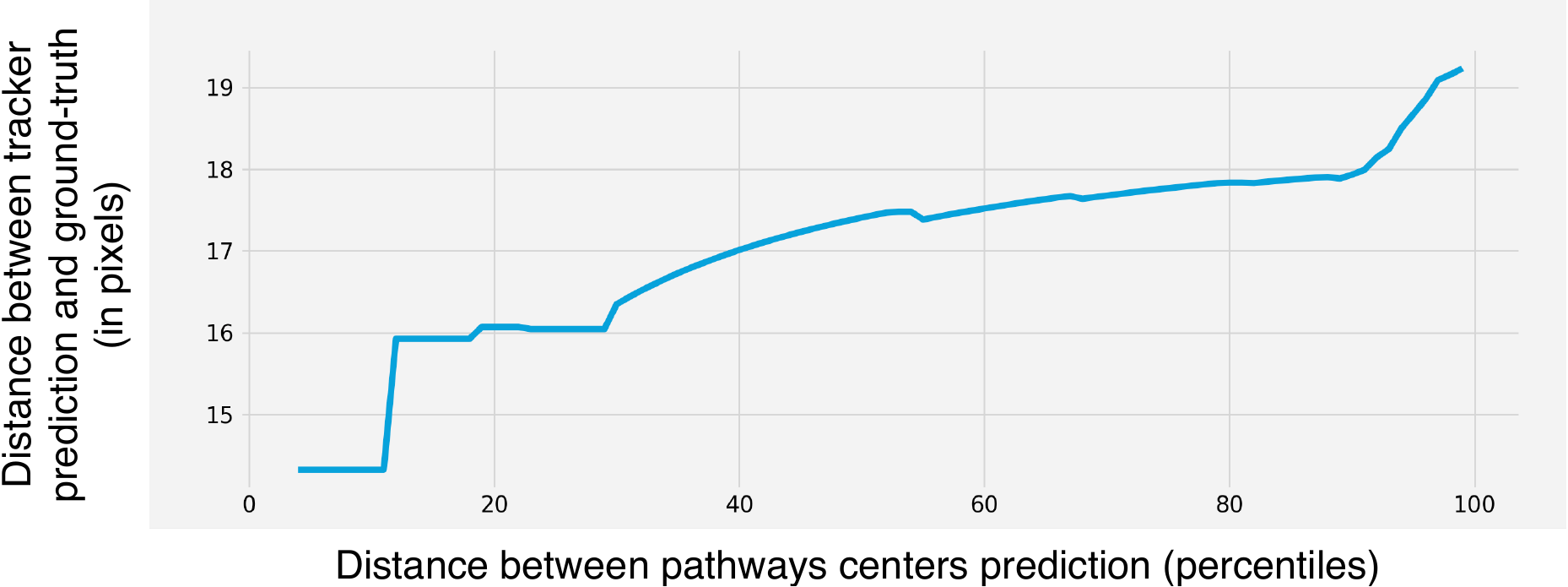}
	\caption{The plot shows the expected distance to ground-truth for a given distance between the centers predicted by the two pathways. As seen, the correlation is strong and is therefore used for selecting in an unsupervised way high confidence frames. We choose HCFs from the first 11\% percentile.}
	\label{fig:hpp_correlation}
\end{figure}

\noindent \textbf{Technical details for training the ConvNetPart:}
For each training frame, we use as input an image crop around the object (Figure~\ref{fig:gt_training_convnetpart}). The ground-truth is given as a segmentation map of the same size, with a circle stamp in the center. We increase robustness and generalization by randomly shifting the image along with its ground-truth - thus we also augment the data by providing two such randomly shifted pairs, per frame. We use the Adam optimizer (Pytorch~\cite{pytorch}), with learning rate $lr$, at first for $k$ epochs on the first $N$(=20) frames, then on $k$ epochs on each update, after each $U$ frames. In the update step, we always use as samples the last $N$ HCFs and the first $N$ frames -  thus we combine the new changes with the initial appearance. The training loss was $MSE = \frac{\sum (x_i - y_i)^2 }{n}$. Note that we did not experiment with many architectures or loss functions, which might have further improve performance.\\ 
\noindent \textbf{Parameters:} we use the following parameters values in all our experiments from Section~\ref{sec:experiments}: $\alpha = 0.6$, $U = 10$ frames, $t_d=1.4$, $p_+ = 0.2$, $p_- = 0.1$, $k = 10$ epochs, $N = 20$ frames, $lr = 1e-5$ and $N_{max} = 200$ parts for each scale size.

\begin{figure}
	\centering
	\includegraphics[width=0.85\linewidth]{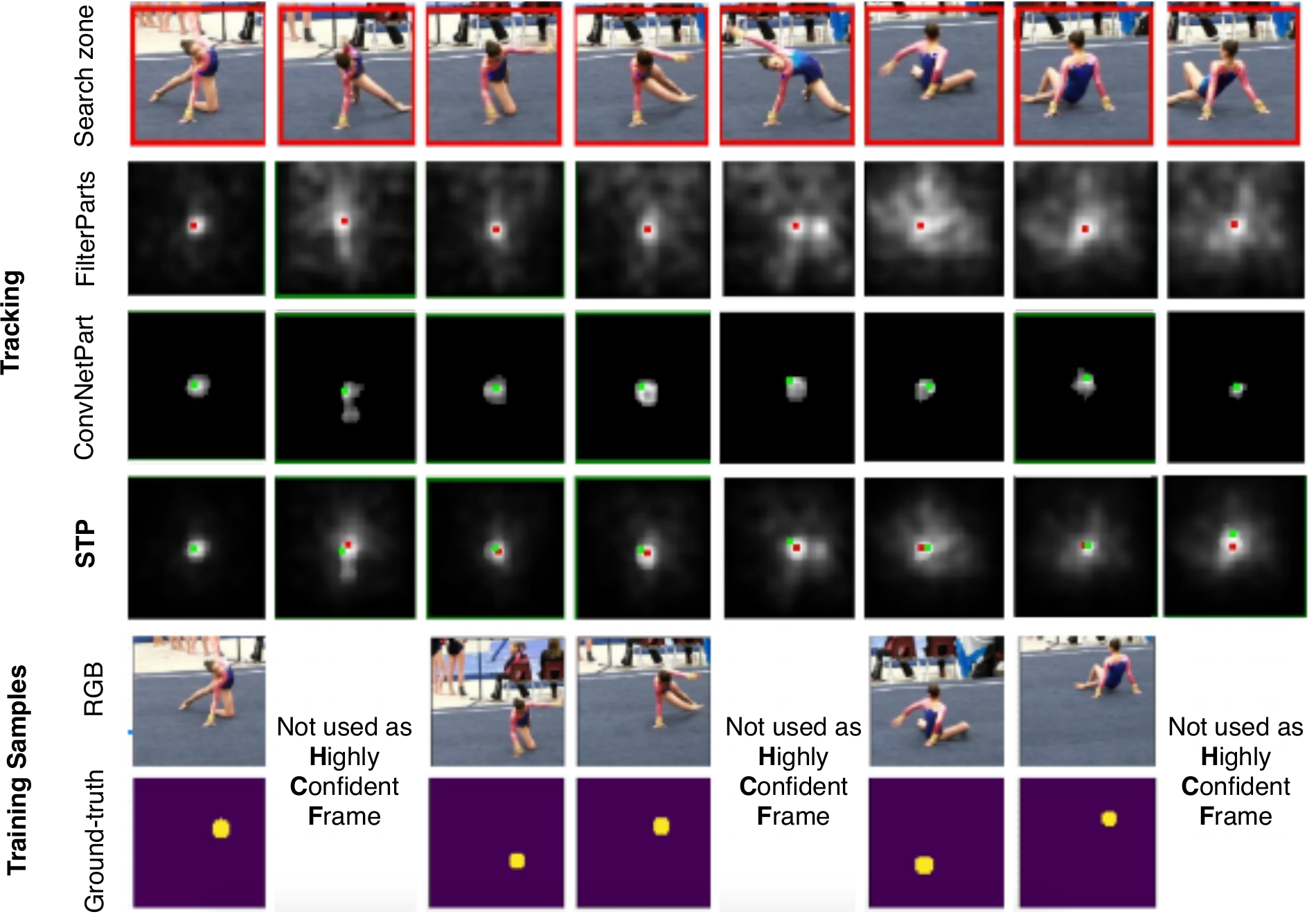}
	\caption{The voting maps for FilterParts, ConvNetPart and the final (STP), respectively. We also show the qualitative view of training samples selection for ConvNetPart. Frame is not Highly Confident if pathways votes centers are distanced. Best seen in colors.}
	\label{fig:gt_training_convnetpart}
\end{figure}

\section{Experimental Analysis}
\label{sec:experiments}

\noindent \textbf{Results on VOT17 and VOT16 benchmarks:} We tested our tracker on the top visual object tracking benchmarks, VOT17~\cite{vot17} and VOT16~\cite{vot16}.
VOT16 contains 60 video sequences, containing many difficult cases of occlusion, illumination change, motion change, size change and camera motion. The VOT17 dataset replaces sequences from VOT16 that were solved by most trackers with new and more difficult ones. For computing the final EAO  evaluation score, VOT setup is re-initializing the tracker when it completely misses the target.

In Table~\ref{tab:vot_results} we present the results after running our tracker through the VOT toolkit. We compared our method against top published tracking methods: ECO~\cite{eco}, CCOT~\cite{ccot}, CFWCR~\cite{CFWCR}, Staple~\cite{Staple}, ASMS~\cite{ASMS}, EBT~\cite{ebt}, CCCT~\cite{CCCT}, CSRDCF~\cite{CSRDCF}, MCPF~\cite{MCPF}, ANT~\cite{ANT}, some with reported results on both benchmarks. Our STP outperforms the current state of the art methods on VOT17, and is in the top three on VOT16. Note that we used the exact same set of parameters on all videos from both VOT17 and VOT16. What distinguishes our tracker the most from the rest is the much lower failure rate ($R$ is 0.76 vs. second best 1.13, on VOT17). We think this is due to the robustness gained by the use of co-occurrence constraints in all aspects of learning and inference, and the dual-pathway structure, with each pathway having complementary advantages. In the \textbf{supplementary material} we present visual results of our tracker on VOT17 and comparisons on different 
challenging cases, as tagged by VOT evaluation. We are in top first or second on 4 out of 5 special cases, while being the first overall as shown in the Table~\ref{tab:vot_results}. VOT16~\cite{vot16} and VOT17~\cite{vot17} identify occlusion as the most difficult case, on which we strongly outperform the others. Next we show how each design choice influenced the strong performance of our tracker. \\

\begin{table}
	\caption{Top published trackers in terms of Expected Average Overlap (EAO), Robustness and Accuracy ($R, R^*$ and $A$). We computed R in two ways: 1) $R$ as initially computed by VOT and also reported by our main competitors, ECO~\cite{eco} and CFWCR~\cite{CFWCR}; and 2) $R^*$, a more complex robustness metric, as currently computed in the VOT benchmark. Note that our tracker outperforms the published methods in terms of both robustness measures $R, R^*$ on both VOT17 and VOT16 by a significant margin. We obtain the state of the art final EAO metric on VOT17 and the 3$^{rd}$ EAO score on VOT16. Our overlap score ($A$) is slightly lower as we did not explicitly learn object shape or mask. Note that we obtained these results with the exact same tracker and parameters for both VOT17 and VOT16. We will make our code available.} 
	
	\centering
	\begin{tabular}{|c| c|c|c|c|  c|c|c|c|}
		\hline
		\multirow{2}{*}{\diagbox[width=10em]{Tracker}{Dataset}}
		& \multicolumn{4}{|c|}{VOT17~\cite{vot17}}& \multicolumn{4}{|c|}{VOT16~\cite{vot16}} \\
		& EAO & $R \downarrow $ & $A \uparrow$ & $R^* \downarrow$  & EAO & $R \downarrow$ & $A \uparrow$ & $R^* \downarrow$\\ 
		\hline
		\textbf{STP} (ours)&  \color{red} \textbf{0.309} & \color{red} \textbf{0.76} & 0.44 & \color{red} \textbf{0.206} & \color{green}0.361 & \color{red} \textbf{0.47} & 0.48 & \color{red} \textbf{0.140} \\
		\hline
		CFWCR~\cite{CFWCR}  & \color{blue}0.303 & \color{green}1.2 & 0.48 & \color{blue}0.267 & \color{red} \textbf{0.39} &\color{green} 0.81 & \color{red}  \textbf{0.58} & - \\
		\hline
		ECO~\cite{eco}&  \color{green} 0.28 & \color{blue} 1.13 & 0.48 & \color{green}0.276 & \color{blue} 0.374 & \color{blue}0.72 & \color{blue}0.54 & \color{blue} 0.200 \\
		\hline
		CCOT~\cite{ccot}  & 0.267 &  1.31 & \color{green}0.49 & 0.318 & 0.331 & 0.85 & \color{green}0.52 & \color{green} 0.238\\
		\hline
		Staple~\cite{Staple}  & 0.169 & 2.5 & \color{red} \textbf{0.53} & 0.688 & 0.295  & 1.35 & \color{blue} 0.54 & 0.378 \\
		\hline
		ASMS~\cite{ASMS} & 0.169  & 2.23 & \color{blue} 0.49 & 0.623 & 0.212   & 1.925 & 0.5 & 0.522 \\
		\hline
		CCCT~\cite{CCCT}  & -  & - & - & -   & 0.223 & 1.83 & 0.442 & 0.461  \\
		\hline
		EBT~\cite{ebt} & - & - & - & - & 0.291  & 0.9 & 0.44 & 0.252 \\
		\hline
		CSRDCF~\cite{CSRDCF}  & 0.256 & 1.368 & 0.491 & 0.356  & -  & -  & - & - \\
		\hline
		MCPF~\cite{MCPF} & 0.248 & 1.548 & 0.510 & 0.427 & -  & - & - & -  \\
		\hline
		ANT~\cite{ANT} & 0.168  & 2.16 & 0.464 & 0.632  & -  & - & - & -  \\
		\hline
	\end{tabular}
	\label{tab:vot_results}
\end{table}

\noindent \textbf{Combining the FilterParts and ConvNetPart pathways:} In Table~\ref{tab:pathways_results} we test the effect of combining the two pathways on the overall tracker. Each pathway is let by itself to guide the tracker. In the "FilterParts only" line, we have results where the first pathway becomes the tracker, with no influence from ConvNetPart ($\alpha = 1$). On the second we show the opposite case, when the tracker is influenced only by ConvNetPart ($\alpha = 0$). In that case the ConvNetPart is trained on the first 20 frames, then continuously updated on its own output, with no influence from the FilterParts pathway.

In general, the FilterParts pathway is more robust and resistant to drifting because it incorporates new information slower, after validating the candidates in time. It is also based on stronger pre-trained features on ImageNet~\cite{imagenet}. It is more stable (lower failure rate) but less capable of learning about object appearance (lower accuracy, as IOU w.r.t  ground-truth). The ConvNetPart pathway is deeper and more powerful, but as it is continuously trained on its own tracker output it is prone to overfitting to background noise, resulting in many failures. 

When using both components, the two pathways work in conjunction and learn from each other using their outputs' co-occurrence constraints. The deeper pathway (ConvNetPart) is learning from the less flexible but more robust pathway (FilterParts). The numbers confirm our intuition and show that the two paths work in complementary, each bringing important value to the final tracker. The boost in performance after combining them is truly significant.\\

\begin{table}
	\caption{In "FilterParts only" experiment, the second pathway is not used at all. In "ConvNetPart only" experiment, we use the FilterParts pathway only for the first 20 frames, to initialize the network, and not use it afterwards. In the absence of high confidence frames selection, the ConvNetPart is trained on each frame, using its own predictions as ground-truth.}
	
	\centering
	\begin{tabular}{|c|   c|c|c|  c|c|c|}
		\hline
		\multirow{2}{*}{\diagbox[width=10em]{Version}{Dataset}}
		& \multicolumn{3}{|c|}{VOT17}& \multicolumn{3}{|c|}{VOT16} \\
		& EAO & $R \downarrow $ & $A \uparrow$  & EAO & $R \downarrow $ & $A \uparrow$ \\
		\hline
		FilterParts only & 0.25 & 0.99 & 0.42 & 0.306 & 0.80 & 0.44 \\
		\hline
		ConvNetPart only  & 0.205 & 2.09 & 0.43 & 0.265 & 1.53 & 0.46\\
		\hline
		Combined  & \textbf{0.309} & \textbf{0.765} & \textbf{0.44} & \textbf{0.361} & \textbf{0.47} & \textbf{0.48}\\
		\hline
	\end{tabular}
	\label{tab:pathways_results}
\end{table}

\noindent \textbf{Using different part roles in FilterParts pathway:}
In this case all filters have one single role. Instead of considering candidates, reliable and gold parts, which ensure stability over time, now all parts added over the sequence have the right to vote at any time. In Table~\ref{tab:filterparts_states} we see that the impact of multiple roles for filter parts, depending on their validation in time is high, bringing a 5\% increase in terms of EAO, comparing to the basic one role for all version.
\begin{table}
	\caption{Impact of different part roles used in FilterParts pathway. Considering roles based on parts credibility over time (candidate, reliable, gold), which is measured using spatial and temporal co-occurrences, is of great benefit to the tracker. It brings an advantage of 5\% in EAO over the vanilla, "one role for all" case.}
	
	\centering
	\begin{tabular}{|c|   c|c|c|  c|c|c|}
		\hline
		\multirow{2}{*}{\diagbox[width=10em]{Version}{Dataset}}
		& \multicolumn{3}{|c|}{VOT17}& \multicolumn{3}{|c|}{VOT16} \\
		& EAO & $R \downarrow $ & $A \uparrow$ & EAO & $R \downarrow $ & $A \uparrow$ \\
		\hline
		One role  & 0.262 &  0.99 &  0.44 & 0.31 & 0.715 & 0.47\\
		\hline
		All roles  & \textbf{0.309} & \textbf{0.765} & \textbf{0.44} & \textbf{0.361} & \textbf{0.47} & \textbf{0.48}\\
		\hline
	\end{tabular}
	\label{tab:filterparts_states}
\end{table}

\noindent \textbf{Learning with Highly Confident Frames on ConvNetPart pathway:} 
In order to better appreciate the value of HCFs in training the ConvNetPart, we have tested it against the cases of training on all frames (all frames are good for training) and that of training only on the first 20 frames (no frame is good, except for the first 20 when the ConvNetPart is initialized).
As we can see in Table~\ref{tab:hcf_results}, the "Full continuous update" regime on all frames is
worst or at most similar in performance with "No update" at all. This shows that the model can overfit very quickly, immediately resulting in drifting (high failure rate). The idea to learn only from Highly Confident Frames is of solid value, bringing a 2\% improvement in the final metric EAO, and a large cut off in failure rate. Even when we randomly select frames to be HCFs, of the same number as in the case of the true HCF measure, we again obtained the same drop of 2\% in performance. These results, along with the statistical correlation between HCF and the ground-truth presented previously in 
Figure~\ref{fig:hpp_correlation} validate experimentally the value of considering only a smaller set of high precision frames for training, even when that set might be just a small portion of all high quality frames.\\

\begin{table}
	\caption{Comparison in performance on VOT17 and VOT16, between updating the ConvNetPart only on Highly Confident Frames (HCF update), not updating it at all (No update), or updating it on every frame (Full update). We mention that in all our experiments we used the top 11$\%$ past frames, in confidence score, to perform training at a given time.}
	
	\centering
	\begin{tabular}{|c|   c|c|c|  c|c|c|}
		\hline
		\multirow{2}{*}{\diagbox[width=10em]{Version}{Dataset}}
		& \multicolumn{3}{|c|}{VOT17}& \multicolumn{3}{|c|}{VOT16} \\
		& EAO & $R \downarrow $ & $A \uparrow$ & EAO & $R \downarrow $ & $A \uparrow$\\
		\hline
		No update & 0.28 & 0.95 & 0.43 & 0.34 & 0.7 & \textbf{0.48} \\
		\hline
		Full update & 0.284 & 0.92 & \textbf{0.44} & 0.327 & 0.66 & 0.46 \\
		\hline
		HCFs update & \textbf{0.309} & \textbf{0.765} & \textbf{0.44}  & \textbf{0.361} & \textbf{0.47} & \textbf{0.48}\\
		\hline
	\end{tabular}
	\label{tab:hcf_results}
\end{table}

\noindent \textbf{Speed:}
The \say{No update} version runs in realtime, at 30fps on GTX TITAN X, for 600 filter parts. The performance of the \say{No update} compared to our best version, the \say{HCFs update}, is only 
2\% lower, in terms of EAO, on both benchmarks, as presented in Table~\ref{tab:hcf_results}.

Our top version, the \say{HCFs update}, runs at 4fps due to the updates of the ConvNetPart, which happen in 5\% of the frames. The computational time needed by these updates depend on the GPU technology we use and is expected to drop in the near future as GPUs are getting faster. The top \say{HCFs update} can run at 30fps if the updates and the tracking are done in parallel, such that when the ConvNetPart update is performed the \say{No update} version continues tracking. The performance of the parallel version drops by about 1\%, situated between the top sequential \say{HCFs update} and the \say{No update} versions.

\section{Conclusions}
\label{sec:concl}

We proposed a deep neural network system for object tracking that functions as a society of tracking parts. Our tracker has two main deep pathways, one that is less flexible but more robust, and another that is less robust but more capable of adapting to complex changes in object appearance. Each part uses co-occurrences constraints in order to keep its robustness high over time, while allowing some degree of adaptability. The two pathways are also combined in a robust manner, by joining their vote maps and picking the locations where their votes co-occurred the most. From a technical point of view, the novelty aspects of our system include: \textbf{1)} the way the classifiers in the FilterParts pathway are learned and ascribed different roles, depending on their degree of reliability. These roles relate to the idea of a society, where some parts are candidates that are being monitored, others are reliable voters, while those who proved their reliability long enough become gold members; \textbf{2)} another novelty aspect represents the way we train the ConvNetPart on high confidence frames only, by selecting for training only those frames where the two different and complementary pathways agree; and \textbf{3)} we provide
a novel theoretical result, which proves that the efficient one sample vs. all strategy employed for learning in the FilterParts path, is stable - it basically gives the same classifier as in the balanced case. In experiments we provide solid validation of our design choices and show state of the art performance on VOT17 and top three on VOT16, while staying on top on both in terms of robustness ($R$ and $R^*$, which measure the failure rate), by a significant margin.

\paragraph{Acknowledgements:} This work was supported in part by UEFISCDI, 
under projects PN-III-P4-ID-ERC-2016-0007 and PN-III-P1-1.2-PCCDI-2017-0734.

\clearpage

\section{Supplementary material}
\noindent In Section \ref{subsec:bbox_estimation} we detailed our simple method for computing the bounding box. Next, in Section \ref{subsec:eval_challenging} we evaluate our tracker on the difficult cases. In Section \ref{subsec:lin_ridge_regress} we elaborate the proof behind our original result, that the weighted solution for our linear classifiers has the same direction as the unbalanced solution. In Section \ref{subsec:matrix_inversion_lemma} we discuss the use of a faster solution for inverting a matrix, adjusted for our case.

\subsection{Bounding Box Estimation per frame}
\label{subsec:bbox_estimation}
We compute the bounding boxes for each frame in a simple manner. For approximating the affine transformation between the original bounding box and the current one, we took a point matching approach. Each part that agreed on the voted center has an expected position w.r.t the center. There is a small displacement between the actual location of parts, given by the maximum response of their corresponding classifier and their expected location w.r.t center. The match between the actual and expected positions gives, through least squares, the affine transformation.

\subsection{Evaluation and comparisons on challenging cases}
\label{subsec:eval_challenging}
Brief comparison with top methods on various difficulty cases, as generated by VOT evaluation. We are in top two on 4 out of 5 cases. Note that VOT16 \cite{vot16} and VOT17 \cite{vot17} identify occlusion as the most difficult case, on which we strongly outperform the others (Figure~\ref{fig:visual_attributes}).

\begin{figure}
	\centering
	\includegraphics[width=0.6\linewidth]{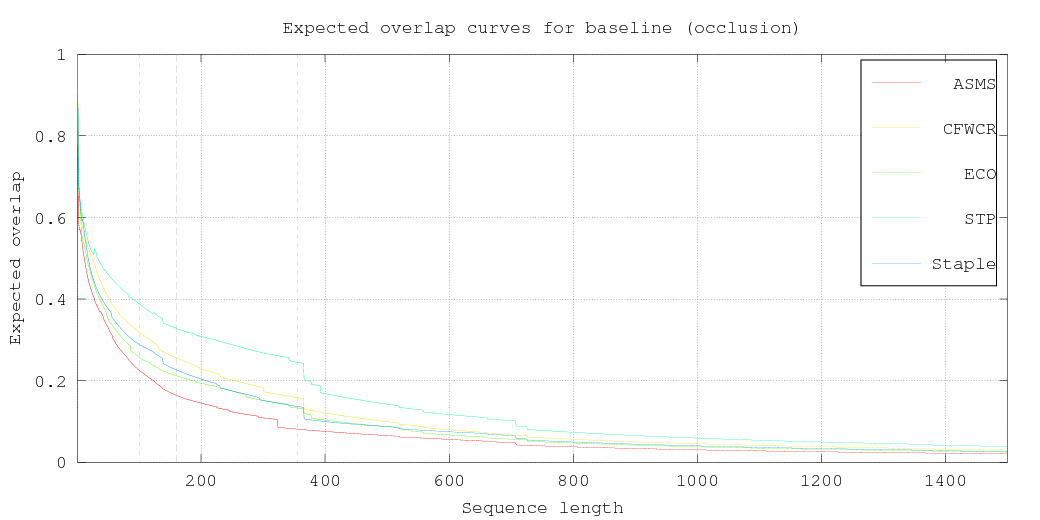}
	\includegraphics[width=0.3\linewidth]{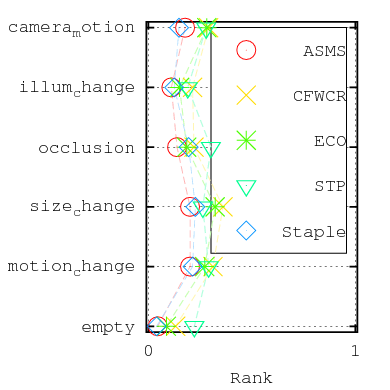}
	\caption{Occlusion EAO (left) and EAO rank for all Visual Attributes (right) for VOT17.}
	\label{fig:visual_attributes}
\end{figure}

\subsection{Fast One-sample vs. All (weighted)}
\label{subsec:lin_ridge_regress}

It is easy to obtain the closed form solution for $c_i$, from linear ridge regression formulation (\cite{DBLP:books/lib/Murphy12} Ch. 7.5), by minimizing the convex cost $\frac{1}{n}\|\mathbf{Dc}_i-\mathbf{y}_i\| + \lambda \mathbf{c}_i^\top \mathbf{c}_i$, we get Eq. \ref{eq:lin_ridge_reg}. It results the well known solution by inverting the positive definite matrix $\mathbf{D^\top D} +\lambda \mathbf{I}_k$.

\begin{equation}
\label{eq:lin_ridge_reg}
(\mathbf{D^\top D} +\lambda \mathbf{I}_k) \mathbf{c}_i = \mathbf{D^\top} \mathbf{y}_i
\end{equation}

In "one vs all" context, we choose 
$y_i^\top$ =
\[
\begin{bmatrix}
0 & 0 & ... & 1 & ... & 0 & 0\\
\end{bmatrix}\], with 1 only on the $i^{th}$ position. So, the multiplication with $y_i$ selects a column form $\mathbf{D}$: $\mathbf{D^\top} \mathbf{y}_i = \mathbf{d}_i$. Eq. \ref{eq:lin_ridge_reg} becomes:

\begin{equation}
\label{eq:simplified_lin_ridge_reg}
(\mathbf{D^\top D} +\lambda \mathbf{I}_k) \mathbf{c}_i  = \mathbf{d}_i
\end{equation}

When building classifiers, the classes should be balanced as numbers of entries. This ensures that the comparison between the activation scores for two different classifiers is valid. In "one vs all", usually the positive class has fewer instances than the negative class. So we needed to use a weighted solution for linear ridge regression \cite{weighted_lrr}, in order to build a balanced classifier for our "part of the object vs others/context" classifiers.

We prove that for a specific form of the weights, the weighting can be applied after computing the simple version (closed form linear regression, Eq. \ref{eq:simplified_lin_ridge_reg}). This is very important for our algorithm, because for the simple ridge regression we need to compute only one matrix inverse for all classifiers in one step, one matrix that all of them will share: ($\mathbf{D^\top D} +\lambda \mathbf{I}_k)^{-1}$ from Eq. \ref{eq:simplified_lin_ridge_reg}. For the weighted case, the closed form solution (as in  \cite{weighted_lrr}) would be different from classifier to classifier:

\begin{equation}
\label{eq:weight_lin_ridge_reg}
(\mathbf{D^\top W}_i \mathbf{D} +\lambda \mathbf{I}_k)\mathbf{\theta}_i= \mathbf{D^\top W}_i \mathbf{y}_i
\end{equation}

The weight matrix for a classifier $\mathbf{W_i}$ has the following form:
\begin{equation}
\label{eq:sparse_w}
\mathbf{W_i} = \mathbf{I}_n + 
\begin{bmatrix}
0 & 0 & 0 & ... & 0 & 0 & 0 \\
0 & 0 & 0 & ... & 0 & 0 & 0 \\
...\\
0  & 0 & ... & w_i &... & 0 & 0\\
...\\
0 & 0 & 0 & ... & 0 & 0 & 0 \\
\end{bmatrix}
= \mathbf{I}_n + \mathbf{W}_{sparse_i}
\end{equation}
with $0$s and  $w_i$ only on $i^{th}$ position on the diagonal, $i$ being the index of the positive patch in data matrix, $\mathbf{D}$.

Replacing Eq.\ref{eq:sparse_w} in Eq.\ref{eq:weight_lin_ridge_reg}, and observing that $\mathbf{D^\top} \mathbf{W}_{sparse_i} = w_i [0|\mathbf{d}_i|0]$, the right hand side becomes: $\mathbf{D^\top W}_i \mathbf{y}_i = \mathbf{D^\top} (\mathbf{I}_n + \mathbf{W}_{sparse_i}) \mathbf{y}_i = \mathbf{D^\top} \mathbf{y}_i + w_i [0| \mathbf{d}_i | 0] \mathbf{y}_i = \mathbf{d}_i + w_i \mathbf{d}_i $. So, for the right term we get:

\begin{equation}
\label{eq:simplified_y}
\mathbf{D^\top W}_i \mathbf{y}_i  = (1 + w_i) \mathbf{d}_i 
\end{equation}

By doing the same operations on the left term:
$\mathbf{D^\top W}_i \mathbf{D} = \mathbf{D^\top} (\mathbf{I}_n + \mathbf{W}_{sparse_i}) \mathbf {D} = \mathbf{D^\top D} + w_i [0|\mathbf{d}_i|0] \mathbf{D} = \mathbf{D^\top D} + w_i \mathbf{d}_i  \mathbf{d}_i^\top$, the Eq. \ref{eq:weight_lin_ridge_reg} can be rewritten:

\begin{equation}
\label{eq:simplified_weight_lin_ridge_reg}
(\mathbf{D^\top D} + w_i \mathbf{d}_i  \mathbf{d}_i^\top +\lambda \mathbf{I}_k)\mathbf{\theta}_i= (1 + w_i) \mathbf{d}_i
\end{equation}

Let $\theta_i = q_i \mathbf{c}_i$, where $\mathbf{c}_i$ is the solution for linear ridge regression (Eq. \ref{eq:simplified_lin_ridge_reg}) and $q_i \in \mathbb{R}$. Then Eq. \ref{eq:simplified_weight_lin_ridge_reg} becomes:
$(\mathbf{D^\top D} + w_i \mathbf{d}_i \mathbf{d}_i^\top+\lambda \mathbf{I}_k) q_i \mathbf{c}_i= (1+w_i)\mathbf{d}_i$. From Eq. \ref{eq:simplified_lin_ridge_reg}, by simplifying terms we obtain
$q_i \mathbf{d}_i + q_i w_i \mathbf{d}_i \mathbf{d}_i ^\top \mathbf{c}_i = (1+w_i)\mathbf{d}_i$. Then, by multiplying at left with $\frac{ \mathbf{d}_i^\top}{||\mathbf{d}_i ||_2^2}$, we get:

\begin{equation}
\label{eq:pre_q}
q_i + q_i w_i  \mathbf{d}_i^\top \mathbf{c}_i = (1 + w_i)
\end{equation}

So, the solution for $q_i$ is ($w_i$ is $n-1$, because in "one vs all" classification, all elements in $\mathbf{D}$ are negative samples, except for one, the $i^{th}$):

\begin{equation}
\label{eq:final_q}
q_i  = \frac{(1 + w_i)} {1 + w_i  \mathbf{d}_i^\top \mathbf{c}_i} = \frac{n} {1 + (n-1)  \mathbf{d}_i^\top \mathbf{c}_i}
\end{equation}

Therefore, we proved that if $\mathbf{c}_i$ is the unique solution of linear ridge regression (since $\mathbf{D^\top D} + \lambda \mathbf{I}_k $ is always invertible, the solution in Eq. \ref{eq:simplified_y} is unique), then $q_i \mathbf{c}_i$ ($q_i$ from Eq. \ref{eq:final_q}) is the unique solution of Eq. \ref{eq:simplified_lin_ridge_reg} ($\mathbf{D^\top D} + w_i \mathbf{d}_i \mathbf{d}_i^\top+\lambda \mathbf{I}_k$ is always invertible, since it is also positive definite).

\subsection{Faster solution with efficient matrix inversion}
\label{subsec:matrix_inversion_lemma}
Consider a general partitioned matrix $\mathbf{M}$
=
\[\begin{bmatrix}
\mathbf{E} & \mathbf{F} \\
\mathbf{G} & \mathbf{H} \\
\end{bmatrix}\], with $\mathbf{E}$ and $\mathbf{H}$ invertible (Matrix Inversion Lemma~\cite{DBLP:books/lib/Murphy12}, Ch. 4.3.4.2). Then the following relation takes place:

\begin{equation}
(\mathbf{E} - \mathbf{F H}^{-1}\mathbf{G})^{-1} \mathbf{F} \mathbf{H}^{-1} = \mathbf{E}^{-1}\mathbf{F}(\mathbf{H} - \mathbf{G}\mathbf{E}^{-1}\mathbf{F})^{-1}
\end{equation}

By making the replacement: $\mathbf{E} = \lambda \mathbf{I}_k$, $\mathbf{H} =  \mathbf{I}_n$, $\mathbf{F} = \mathbf{D}^\top $, $\mathbf{G} = -\mathbf{D}$ ($\mathbf{E}$ and $\mathbf{H}$ are invertible) and rearranging the terms, we obtain ~\cite{DBLP:books/lib/Murphy12} (Ch. 14.4.3.2):

\begin{equation}
\label{eq:matrix_invers_lin_regres}
(\mathbf{D^\top D} +\lambda \mathbf{I}_k)^{-1} \mathbf{D^\top} =  \mathbf{D^\top} (\mathbf{D D^\top} +\lambda \mathbf{I}_n)^{-1}
\end{equation}
We observe that the first term in Eq. \ref{eq:matrix_invers_lin_regres} is part of the closed form solution for the linear regression (without labels $y_i$). So, we can replace it with the one easier to compute. Since the bottleneck here is inverting the positive definite matrix $\mathbf{D^\top D} +\lambda \mathbf{I}_k$ or $\mathbf{D D^\top} +\lambda \mathbf{I}_n$, we will choose the easiest to invert. And this is the smaller one. In our case, $n$ is the number of patches, and $k$ is the number of features in each patch (equal to the patch area in feature space $\times$ number of channels, which is 256). A roughly approximation for $n$ is 500 and approximations for $k$ are $6400 \approx 5 \times 5 \times 256 $, $74000 \approx 17 \times 17 \times 256 $ and bigger for patches of bounding box size.

The second solution for computing the classifier is inverting a matrix two orders of magnitude smaller (as number of elements) than the first solution. So we choose the second part of Eq. \ref{eq:matrix_invers_lin_regres} for the closed form solution.

\clearpage

\bibliographystyle{splncs04}
\bibliography{egbib}

\end{document}